\newcommand{\E}{\mathbb{E}}
\newcommand{\R}{\mathbb{R}}
\newcommand{\indypsi}{\mathds{1}_{\{y \in\Psi(x)\}}}
\newcommand{\indyphi}{\mathds{1}_{\{y \in\Phi(x)\}}}
\newcommand{\indynotphi}{\mathds{1}_{\{y \in \Phi^0(x)\}}}
\newcommand{\Prob}{\mathcal{P}}
\newcommand{\lowr}{\nabla}
\newcommand{\highr}{\Delta}
\DeclarePairedDelimiterX{\infdivx}[2]{(}{)}{%
  #1\;\delimsize\|\;#2%
}
\DeclarePairedDelimiter\abs{\lvert}{\rvert}
\newcommand{\chidiv}{\Delta_{\chi^2}\infdivx}
\newcommand{\MSE}[1]{\textrm{MSE}\left[#1\right]}
\newcommand{\VsIPS}[1]{\hat{V}^\Phi_\textrm{sIS}{(#1)}}
\newcommand{\VIPS}[1]{\hat{V}_\textrm{IS}{(#1)}}
\newtheorem{asp}{Assumption}
\title{Learning from eXtreme Bandit Feedback}
\author {
    Romain Lopez\textsuperscript{\rm 1},
    Inderjit S. Dhillon\textsuperscript{\rm 2, \rm 3},
    Michael I. Jordan\textsuperscript{\rm 1, \rm 2} \\
}
\begin{document}

\maketitle

\begin{abstract}
We study the problem of batch learning from bandit feedback in the setting of extremely large action spaces. Learning from extreme bandit feedback is ubiquitous in recommendation systems, in which billions of decisions are made over sets consisting of millions of choices in a single day, yielding massive observational data. In these large-scale real-world applications, supervised learning frameworks such as eXtreme Multi-label Classification (XMC) are widely used despite the fact that they incur significant biases due to the mismatch between bandit feedback and supervised labels. Such biases can be mitigated by importance sampling techniques, but these techniques suffer from impractical variance when dealing with a large number of actions. 
In this paper, we introduce a \emph{selective importance sampling estimator} (sIS) that operates in a significantly more favorable bias-variance regime. The sIS estimator is obtained by performing importance sampling on the conditional expectation of the reward with respect to a small subset of actions for each instance (a form of Rao-Blackwellization). We employ this estimator in a novel algorithmic procedure---named Policy Optimization for eXtreme Models (POXM)---for learning from bandit feedback on XMC tasks. In POXM, the selected actions for the sIS estimator are the top-$p$ actions of the logging policy, where $p$ is adjusted from the data and is significantly smaller than the size of the action space. 
We use a supervised-to-bandit conversion on three XMC datasets to benchmark our POXM method against three competing methods: BanditNet, a previously applied partial matching pruning strategy, and a supervised learning baseline. Whereas BanditNet sometimes improves marginally over the logging policy, our experiments show that POXM systematically and significantly improves over all baselines.
\end{abstract}
\section{Introduction}

In the classical supervised learning paradigm, it is assumed that every data point is accompanied by a label.  Such labels provide a very strong notion of feedback,
where the learner is able to assess not only the loss associated with the action that they have chosen but can also assess losses of actions that they did not choose.  A useful weakening of this paradigm involves considering so-called ``bandit feedback,'' where the training data simply provides evaluations of selected actions without delineating the correct action.  Bandit feedback is often viewed as the province of reinforcement learning, but it is also possible to combine bandit feedback with supervised learning by considering a batch setting in which each data point is accompanied by an evaluation and there is no temporal component.  This is the Batch Learning from Bandit Feedback (BLBF) problem~\cite{Swaminathan2015CF}.

Of particular interest is the off-policy setting where the training data is provided by a \emph{logging policy}, which differs from the learner's policy and differs from the optimal policy.  Such problems arise in many real-world problems, including supply chains, online markets, and recommendation systems~\cite{8777645}, where abundant data is available in a logged format but not in a classical supervised learning format.

Another difficulty with the classical notion of a ``label'' is that real-world problems often involve huge action spaces.  This is the case, for example, in real-world recommendation systems where there may be billions of products and hundreds of millions of consumers. Not only is the cardinality of the action space challenging both from a computational point of view and a statistical point of view, but even the semantics of the labels can become obscure---it can be difficult to place an item conceptually in one and only category. Such challenges have motivated the development of eXtreme multi-label classification (XMC)
and eXtreme Regression (XR)~\cite{Bhatia16} methods, which focus on computational scalability issues and target settings involving millions of labels.  These methods have had real-world applications in domains such as e-commerce~\cite{agrawal2013multi} and dynamic search advertising~\cite{prabhu2018parabel,prabhu2020extreme}.

We assert that the issues of bandit feedback and extreme-scale action spaces are related.  Indeed, it is when action spaces are large that it is particularly likely that feedback will only be partial.  Moreover, large action spaces tend to support multiple tasks and
grow in size and scope over time, making it likely that available data will be in the form of a logging policy and not a single target input-output mapping.

We also note that the standard methodology for accommodating the difference between the logging policy and an optimal policy needs to be considered carefully in the setting of large action spaces.  Indeed, the standard methodology is some form of importance sampling~\cite{Swaminathan2015CF}, and importance sampling estimators can run aground when their variance is too high (see, e.g.,~\citet{LefortierSGJR16}).  Such variance is likely to be particularly virulent in large action spaces. Some examples of the XMC framework do treat labels as subject to random variation~\cite{jain2016extreme}, but only with the goal of improving the prediction of rare labels; they do not tackle the broader problem of learning from logging policies in extreme-scale action spaces.  It is precisely this broader problem that is our focus in the current paper.

The literature on offline policy learning in Reinforcement Learning (RL) has also been concerned with correcting for implicit feedback bias (see, e.g.,~\citet{degris2012off}).  This line of work differs from ours, however, in that the focus in RL is not on extremely-large action spaces, and RL is often based on simulators rather than logging policies~\cite{chen2019gan,bai2019}. Closest to our work is the work of \citet{chen2019topk}, who propose to use offline policy gradients on a large action space (millions of items). Their method relies, however, on a proprietary action embedding, unavailable to us. 

After a brief overview of BLBF and XMC 
, we present a new form of BLBF that blends bandit feedback with multi-label classification
. We introduce a novel assumption, specific to the XMC setting, in which most actions are irrelevant (i.e., incur a null reward) for a particular instance. This motivates a Rao-Blackwellized~\cite{casella1996rao} estimator of the policy value for which only a small set of relevant actions per instance are considered. We refer to this approach as \emph{selective importance sampling} (sIPS).  We provide a theoretical analysis of the bias-variance tradeoff of the sIPS estimator compared to naive importance sampling
. In practice, the selected actions for the sIPS estimator are the top-$p$ actions from the logging policy, where $p$ can be adjusted from the data. We derive a novel learning method based on the sIPS estimator, which we refer to as \emph{Policy Optimizer for eXtreme Models} (POXM). Finally, we propose a modification of a state-of-the-art neural XMC method AttentionXML~\cite{you2019attentionxml} to learn from bandit feedback. Using a supervised-learning-to-bandit conversion~\cite{Dudik2011}, we benchmark POXM against BanditNet~\cite{Joachims2018}, a partial matching scheme from~\citet{wang2016beyond} and a supervised learning baseline on three XMC datasets (EUR-Lex, Wiki10-31K and Amazon-670K)~\cite{Bhatia16}. We show that naive application of the state-of-the-art method BanditNet~\cite{Joachims2018} sometimes improves over the logging policy, but only marginally. Conversely, POXM provides substantial improvement over the logging policy as well as supervised learning baselines.

\section{Background}
\label{sec:background}

\subsection{eXtreme Multi-label Classification (XMC)}
Multi-label classification aims at assigning a relevant subset $\bm{Y} \subset [L]$ to an instance $x$, where $[L]:=\{1,\ldots,L\}$ denotes the set of $L$ possible labels. XMC is a specific case of multi-label classification in which we further assume that all $\bm{Y}$ are small subsets of a massive collection (i.e., generally $|\bm{Y}|/{L} < 0.01$). Naive one-versus-all approaches to multi-label classification usually do not scale to such a large number of labels and adhoc methods are often employed. Furthermore, the marginal distribution of labels across all instances exhibits a long tail, which causes additional statistical challenges. 

Algorithmic approaches to XMC include optimized one-versus-all methods~\cite{babbar2017dismec,babbar2019,yen2017ppdsparse, yen2016pd}, embedding-based methods~\cite{bhatia2015,tagami2017annexml,guo2019}, probabilistic label tree-based~\cite{prabhu2018parabel,jasinska2016extreme,khandagale2019bonsai,wydmuch2018no} and deep learning-based methods~\cite{you2019attentionxml,liu2017deep,you2019haxmlnet,chang2019xbert}. Each algorithm usually proposes a specific approach to model the text as well as deal with tail labels. For example,~\citet{babbar2019} uses a robust SVM approach on TF-IDF features. PfastreXML~\cite{jain2016extreme} assumes a particular noise model for the observed labels and proposes to weight the importance of tail labels. AttentionXML~\cite{you2019attentionxml} uses a bidirectional-LSTM to embed the raw text as well as a multi-label attention mechanism to help capture the most relevant part of the input text for each label. For datasets with large $L$, AttentionXML trains one model per layer of a shallow and wide probabilistic latent tree using a small set of candidate labels. 

\subsection{Batch Learning from Bandit Feedback (BLBF)} We assume that the instance $x$ is sampled from a distribution $\Prob(x)$. The action for this particular instance is a unique label $y\in [L]$, sampled from the logging policy $\rho(y \mid x)$ and a feedback value $r \in \R$ is observed. Repeating this data collection process yields the dataset $\left[(x_i, y_i, r_i)\right]_{i=1}^n$. The BLBF problem consists in maximizing the expected reward $V(\pi)$ of a policy $\pi$. We use importance sampling (IS) to estimate $V(\pi)$ from data based on the logging policy as follows: 
\begin{align}
\label{eq:IS_one_action}
    \hat{V}_\text{IS}(\pi) = \frac{1}{n}\sum_{i=1}^n\frac{\pi(y_i \mid x_i)}{\rho(y_i \mid x_i)}r_i.
\end{align}
Classically, identifying the optimal policy via this estimator is infeasible without a thorough exploration of the action space by the logging policy~\cite{langford2008exploration}. More specifically, the IS estimator \smash{$\hat{V}_\text{IS}(\pi)$} requires the following basic assumption for there to be any hope of asymptotic optimality:
\begin{asp}
\label{ass:overlap} There exists a scalar $\epsilon > 0$ such that for all $x \in \R^d$ and $y \in [L], \rho(y \mid x) > \epsilon$.
\end{asp}
The IS estimator has high variance when $\pi$ assigns actions that are infrequent in $\rho$; hence a variety of regularization schemes have been developed, based on risk-upper-bound minimization, to control variance. Examples of upper bounds include empirical Bernstein concentration bounds~\cite{Swaminathan2015CF} and various divergence-based bounds~\cite{atan2018counterfactual,Wu2018,Johansson2016,lopez2019cost}. 
Another common strategy for reducing the variance is to propose a model of the reward function, using as a baseline a doubly robust estimator~\cite{Dudik2011, su2019cab}.

A recurrent issue with BLBF is that the policy may avoid actions in the training set when the rewards are not scaled properly; this is the phenomenon of \emph{propensity overfitting}. \citet{Swaminathan2015self} tackled this problem via the self-normalized importance sampling estimator (SNIS), in which IS estimates are normalized by the average importance weight. SNIS is invariant to translation of the rewards and may be used as a safeguard against propensity overfitting. BanditNet~\cite{Joachims2018} made this approach amenable to stochastic optimization by translating the reward distribution:
\begin{align}
\label{eq:BN_one_action}
    \hat{V}_\text{BN}(\pi) = \frac{1}{n}\sum_{i=1}^n\frac{\pi(y_i \mid x_i)}{\rho(y_i \mid x_i)}\left[r_i - \lambda\right],
\end{align}
and selecting $\lambda$ over a small grid based on the SNIS estimate of the policy value. 

Learning an XMC algorithm from bandit feedback requires offline learning from slates $\bm{Y}$, where each element of the slate comes from a large action space. \citet{swaminathan2017off} proposes a pseudo-inverse estimator for offline learning from combinatorial bandits. However, such an approach is intractable for large action spaces as it requires inverting a matrix whose size is linear in the number of actions. Another line of work focuses on offline evaluation and learning of semi-bandits for ranking~\cite{li2018offline,joachims2017unbiased} but only with a small number of actions. In real-world data, a partial matching strategy between instances and relevant actions is applied in applications to internet marketing for policy evaluation~\cite{wang2016beyond,li2015toward}. More recently,~\citet{chen2019topk} proposed a top-k off-policy correction method for a real-world recommender system. Their approach deals with millions of actions although it treats label embeddings as given, whereas this problem is in general a hard problem for XMC. 

\section{Bandit Feedback and Multi-label Classification}
\label{sec:bandit-mulilabel}
We consider a setting in which the algorithm (e.g., a policy for a recommendation system) observes side information $x \in \mathbb{R}^d$ and is allowed to output a subset $\bm{Y} \subseteq [L]$ of the $L$ possible labels. Side information is independent at each round and sampled from a distribution $\Prob(x)$. We assume that the subset $\bm{Y}$ has fixed size $|\bm{Y}| = \ell$, which allows us to adopt the slate notation $\bm{Y} = (y_1, \ldots, y_\ell)$. The algorithm observes noisy feedback for each label, $\bm{R} = (r_1, \ldots, r_\ell)$, and we further assume that the joint distribution over $\bm{R}$ decomposes as 
\smash{\(
    \Prob\left(\bm{R} \mid x, \bm{Y}\right) = \prod_{j=1}^\ell  \Prob\left(r_j \mid x, y_j\right).
\)}
We will denote the conditional reward distribution as a function: \(\delta(x, y) = \E[r \mid x, y].\) In the case of multi-label classification, this feedback can be formed with random variables indicating whether each individual label is inside the true set of labels for each datapoint~\cite{gentile2014multilabel}. More concretely, feedback may be formed from sale or click information~\cite{chen2019topk,chen2019gan}. 

We are interested in optimizing a policy $\pi(\bm{Y} \mid x)$ from offline data. Accessible data is sampled according to an existing algorithm, the logging policy $\rho(\bm{Y} \mid x)$. We assume that both joint distributions over the slate decompose into an auto-regressive process. For example, for $\pi$ we assume:
\begin{align}
\label{eq:pi_decomposition}
    \pi(\bm{Y} \mid x) &=\prod_{j=1}^\ell \pi(y_j \mid x, y_{1:j-1}).
\end{align}
Introducing this decomposition does not result in any loss of generality, as long as the action order is identifiable (otherwise, one would need to consider all possible orderings~\cite{kool2020estimating}). This is a reasonable hypothesis because the order of the actions may also be logged as supplementary information. We now define the value of a policy $\pi$ as:
\begin{align}
\label{eq:value_pi}
    V(\pi) &= \mathbb{E}_{\Prob(x)}\mathbb{E}_{\pi(\bm{Y} \mid x)}\mathbb{E}\left[\bm{1}^\top \bm{R} \mid x, \bm{Y}\right].
\end{align}
In our setting, the reward decomposes as a sum of independent contributions of each individual action. The reward may in principle be generalized to be rank dependent, or to consider interactions between items~\cite{gentile2014multilabel}, but this is beyond the scope of this work.

A general approach for offline policy learning is to estimate $V(\pi)$ from logged data using importance sampling~\cite{Swaminathan2015CF}. As emphasized in~\citet{swaminathan2017off}, the combinatorial size of the action space $\Omega(L^\ell)$ may yield an impractical variance for importance sampling. This is particularly the case for XMC, where typical values of $L$ are minimally in the thousands. A natural strategy to improve over the IS estimator on the slate $\bm{Y}$ is to exploit the additive reward decomposition in Eq.~\eqref{eq:value_pi}. Along with the factorization of the policy in Eq.~\eqref{eq:pi_decomposition}, we may reformulate the policy value as:
\begin{align}
\label{eq:decom_value_pi}
    V(\pi) &= \mathbb{E}_{\Prob(x)}\sum_{j=1}^\ell\mathbb{E}_{\pi(y_{1:j} \mid x)}\delta(x, y_j).
\end{align} 
The benefit of this new decomposition is that instead of performing importance sampling on $\bm{Y}$, we can now use $\ell$ IS estimators, each with a better bias-variance tradeoff. Unbiased estimation of $V(\pi)$ in Eq.~\eqref{eq:decom_value_pi} via importance sampling still requires Assumption~\ref{ass:overlap}. The logging policy must therefore explore a large action space. However, most actions are unrelated to a given context and deploying an online logging policy that satisfies Assumption~\ref{ass:overlap} may yield a poor customer experience.

\section{Learning from eXtreme Bandit Feedback}
\label{sec:learning_XBF}

We now explore alternative assumptions for the logging policy that may be more suitable to the setting of very large action spaces. We formalize the notion that most actions are irrelevant using the following assumption:
\begin{asp}\emph{(Sparse feedback condition).} \label{ass:sparse_feedback} The individual feedback random variable $r$ takes values in the bounded interval $[\lowr, \highr]$. For all $x \in \R^d$, the label set $[L]$ can be partitioned as $[L] =\Psi(x) \coprod \Psi^0(x)$ such that for all actions $y$ of $\Psi^0(x)$, the expected reward is minimal: $\delta(x, y) = \lowr$. 
\end{asp}
We refer to the function $\Psi$ as an \emph{action selector}, as it maps a context to a set of relevant actions. Throughout the manuscript, we use the notation $\Lambda^0$ to refer to the pointwise set complement of any action selector $\Lambda$. Intuitively, we are interested in the case where $\abs*{\Psi(x)} \ll L$ for all $x$. Assumption~\ref{ass:sparse_feedback} is implicitly used in online marketing applications of offline policy evaluation, formulated as a partial matching between actions and instances~\cite{wang2016beyond,li2015toward}. Notably, this assumption can be assimilated to a mixed-bandit feedback setting, where we observe feedback for all of $\Psi^0(x)$ but only one selected action inside of $\Psi(x)$. Under Assumption~\ref{ass:sparse_feedback}, the IS estimator will be unbiased for all logging policies that satisfy the following relaxed assumption:
\begin{asp}\emph{($\Psi$-overlap condition).} \label{ass:extended_overlap} There exists a scalar $\epsilon > 0$ such that for all $x \in \R^d$ and $y \in \Psi(x)$, $\rho(y \mid x) > \epsilon$.
\end{asp}
Batch learning from bandit feedback may be possible under this assumption, as long as the logging policy explores a set of actions large enough to cover the actions from $\Psi$ but small enough to avoid exploring too many suboptimal actions. Furthermore, Assumption~\ref{ass:sparse_feedback} also reveals the existence of $\Psi^0(x)$, a  sufficient statistic for estimating the reward on the irrelevant actions. Making appeal to Rao-Blackwellization~\cite{casella1996rao}, we can incorporate this information to estimate each of the $\ell$ terms of Eq.~\eqref{eq:decom_value_pi} (e.g., in the case $\ell=1$ and $\nabla=0$): 
\begin{align}
     V(\pi) &= \mathbb{E}_{\Prob(x)}\left[\pi\left(\Psi(x) \mid x\right)\cdot\mathbb{E}_{\pi(y \mid x)} \left[\delta(x, y) \mid y \in \Psi(x) \right]\right].
    \label{eq:value_decomposed}
\end{align}
The decomposition in Eq.~\eqref{eq:value_decomposed} suggests that when the action selector $\Psi$ is known, one can estimate $V(\pi)$ via importance sampling for the conditional expectation of the rewards with respect to the event $\{y \in \Psi(x)\}$. Intuitively, this means that one can modify the importance sampling scheme to only include a relevant subset of labels and ignore all the others. Without loss of generality, we assume that $\lowr=0$ in the remainder of this manuscript. 

In practice, the oracle action selector $\Psi$ is unknown and needs to be estimated from the data. It may be hard to infer the smallest $\Psi$ such that Assumption~\ref{ass:sparse_feedback} is satisfied. Conversely, a trivial action selector including all actions is valid (it does include all relevant actions) but is ultimately unpractical. As a flexible compromise, we will replace $\Psi$ in Eq.~\eqref{eq:value_decomposed} by any action selector $\Phi$ and study the bias-variance tradeoff of the resulting plugin estimator.

Let $\rho$ be a logging policy with a large enough support to satisfy Assumption~\ref{ass:extended_overlap}. Let $\Phi$ be an action selector such that $\Phi(x) \subset \textrm{supp}~\rho(\cdot \mid x)$ almost surely in $x$, where $\textrm{supp}$ denotes the support of a probability distribution. The role of $\Phi$ is to prune out actions to maintain an optimal bias-variance tradeoff. In the case $\ell=1$, the $\Phi$-selective importance sampling (sIS) estimator \smash{$\VsIPS{\pi}$} for action selection $\Phi$ can be written as:
\begin{align}
\VsIPS{\pi} = \frac{1}{n}\sum_{i=1}^n\frac{\pi(y_i \mid x_i, y \in \Phi(x))}{\rho(y_i \mid x_i)}r_i.
\end{align}
Its bias and variance depends on how different the policy $\pi$ is from the logging policy $\rho$ (as in classical BLBF) but also on the degree of overlap of $\Phi$ with $\Psi$:
\begin{restatable}[Bias-variance tradeoff of selective importance sampling]{theorem}{biasvar} \label{thm:bias-var} Let $\bm{R}$ and $\rho$ satisfy Assumptions~\ref{ass:sparse_feedback} and~\ref{ass:extended_overlap}. Let $\Phi$ be an action selector such that $\Phi(x) \subset \textrm{supp}~\rho(\cdot \mid x)$ almost surely in $x$. The bias of the sIS estimator is:
\begin{align}
    \abs*{\E\VsIPS{\pi} - V(\pi)} & \leq \highr \kappa(\pi, \Psi, \Phi),
\end{align}
where \smash{$\kappa(\pi, \Psi, \Phi) = \E_{\Prob(x)} \pi\left(\Psi(x) \cap \Phi^0(x) \mid x\right)$} quantifies the overlap between the oracle action selector $\Psi$ and the proposed action selector $\Phi$, weighted by the policy $\pi$. 
The performance of the two estimators can be compared as follows:
\begin{align}
    \MSE{\VsIPS{\pi}} \leq&~\MSE{\VIPS{\pi}} + 2\highr^2\kappa(\pi, \Psi, \Phi) \\ &- \frac{\sigma^2}{n}\E_{\Prob(x)}\frac{\pi^2\left(\Phi^0(x) \mid x\right)}{\rho\left(\Phi^0(x) \mid x\right)},
    \label{eq:lower-bound}
\end{align}
where $\sigma^2 = \inf_{x, y \in \R^d \times [K]}\E[r^2 \mid x, y]$.
\end{restatable}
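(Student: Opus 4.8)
The plan is to reduce both claims to a single expectation identity for the per-sample importance weights and then to separate the bias and variance contributions. Throughout I read the numerator $\pi(y_i\mid x_i, y\in\Phi(x))$ as the policy mass restricted to $\Phi$, so a single summand of $\VsIPS{\pi}$ has the form $Z_s = \frac{\pi(y\mid x)\indyphi}{\rho(y\mid x)}\,r$, while the corresponding summand of $\VIPS{\pi}$ is $Z_{\mathrm{IS}} = \frac{\pi(y\mid x)}{\rho(y\mid x)}\,r$. The key structural observation is that $Z_s = Z_{\mathrm{IS}} - W$ with $W = \frac{\pi(y\mid x)\indynotphi}{\rho(y\mid x)}\,r = Z_{\mathrm{IS}}\,\indynotphi \ge 0$, so that $W^2 = Z_{\mathrm{IS}}W$ and hence $\E[Z_{\mathrm{IS}}W]=\E[W^2]$; this identity removes the only genuine cross term from the variance algebra.

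First I would compute the bias. Taking expectations over $x\sim\Prob$, $y\sim\rho(\cdot\mid x)$ and $r$, the $\rho$ in the denominator cancels and $\E\VsIPS{\pi} = \E_{\Prob(x)}\sum_{y\in\Phi(x)}\pi(y\mid x)\delta(x,y)$, while the same computation gives $\E\VIPS{\pi}=\E_{\Prob(x)}\sum_{y\in\mathrm{supp}\,\rho(\cdot\mid x)}\pi(y\mid x)\delta(x,y)$. Assumption 2 forces $\delta(x,y)=0$ off $\Psi(x)$ (recall $\lowr=0$), so the sum over $\Phi$ collapses to $\Phi\cap\Psi$ and, using Assumption 3 ($\Psi\subseteq\mathrm{supp}\,\rho$), the IS estimator is exactly unbiased. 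Subtracting, the bias equals $-\E_{\Prob(x)}\sum_{y\in\Psi(x)\cap\Phi^0(x)}\pi(y\mid x)\delta(x,y)$, and bounding $\delta\le\highr$ yields $\abs*{\E\VsIPS{\pi}-V(\pi)}\le\highr\,\kappa(\pi,\Psi,\Phi)$ directly. I would record here that $\E\VIPS{\pi}=V(\pi)$, that the bias $b_s:=\E\VsIPS{\pi}-V(\pi)$ satisfies $\abs{b_s}\le\highr\kappa$ and $\E[W]=-b_s$, and that $0\le V(\pi)\le\highr$ — all of which feed the MSE step.

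Next I would assemble the MSE comparison from $\MSE{\VsIPS{\pi}}=b_s^2+\tfrac1n\Var[Z_s]$ and $\MSE{\VIPS{\pi}}=\tfrac1n\Var[Z_{\mathrm{IS}}]$. Expanding $\Var[Z_s]=\Var[Z_{\mathrm{IS}}]-2\,\mathrm{Cov}(Z_{\mathrm{IS}},W)+\Var[W]$ and using $\E[Z_{\mathrm{IS}}W]=\E[W^2]$ together with $\E[W]=-b_s$ and $\E[Z_{\mathrm{IS}}]=V(\pi)$ collapses the variance difference to $\Var[Z_s]-\Var[Z_{\mathrm{IS}}]=-\E[W^2]-b_s^2-2V(\pi)b_s$. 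The variance-reduction term is then lower bounded by evaluating $\E[W^2]=\E_{\Prob(x)}\sum_{y\in\Phi^0(x)}\frac{\pi^2(y\mid x)}{\rho(y\mid x)}\E[r^2\mid x,y]$, replacing $\E[r^2\mid x,y]$ by its infimum $\sigma^2$, and applying the Cauchy–Schwarz (Engel/Titu) inequality $\sum_{y\in\Phi^0}\frac{\pi^2}{\rho}\ge\frac{\pi^2(\Phi^0\mid x)}{\rho(\Phi^0\mid x)}$ to produce exactly the subtracted term $\frac{\sigma^2}{n}\E_{\Prob(x)}\frac{\pi^2(\Phi^0(x)\mid x)}{\rho(\Phi^0(x)\mid x)}$.

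The last step is bookkeeping of the remaining bias terms $b_s^2\big(1-\tfrac1n\big)-\tfrac{2}{n}V(\pi)b_s$, and this is where I expect the only real friction. Using $\kappa\le1$ gives $b_s^2\le\highr^2\kappa^2\le\highr^2\kappa$, and using $V(\pi)\le\highr$ with $\abs{b_s}\le\highr\kappa$ gives $-\tfrac{2}{n}V(\pi)b_s\le\tfrac{2}{n}\highr^2\kappa$; for $n\ge2$ these two pieces sum to at most $2\highr^2\kappa$, which is the origin of the factor of two (as opposed to the sharper but less clean $\highr^2\kappa^2$). The main obstacle is therefore not any single inequality but keeping the sign of the covariance cross term under control: because $b_s\le0$ the term $-2V(\pi)b_s$ is nonnegative and cannot simply be discarded, so one must spend part of the $2\highr^2\kappa$ slack to absorb it, which is precisely why the stated bound carries a factor of $2$ and relies on both $\kappa\le1$ and the boundedness of the reward.
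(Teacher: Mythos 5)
Your proposal is correct and, in substance, follows the same route as the paper's proof: the bias computation is identical (unbiasedness of importance sampling restricted to $\Phi$, then $\delta(x,y)=0$ off $\Psi(x)$), and your Engel/Titu (Cauchy--Schwarz) inequality $\sum_{y\in\Phi^0(x)}\pi^2(y\mid x)/\rho(y\mid x)\ge \pi^2(\Phi^0(x)\mid x)/\rho(\Phi^0(x)\mid x)$ is exactly the paper's ``$\chi^2$-divergence form is $\ge 1$'' step. Where you genuinely differ is the variance bookkeeping, and your version is the more careful one: the paper asserts the equality $\E\VIPS{\pi}^2-\E\VsIPS{\pi}^2=\tfrac1n\E\bigl[\indynotphi r^2\pi^2(y\mid x)/\rho^2(y\mid x)\bigr]$, which silently discards the nonnegative term $\tfrac{n-1}{n}\bigl[(\E\VIPS{\pi})^2-(\E\VsIPS{\pi})^2\bigr]$ (harmless because the inequality goes the right way, but not an equality), whereas your exact per-sample decomposition $Z_s=Z_{\mathrm{IS}}-W$ with the identity $\E[Z_{\mathrm{IS}}W]=\E[W^2]$ keeps every term. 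The price of this exactness is that you must also absorb $b_s^2(1-\tfrac1n)$ in the final accounting, which is why you need $\kappa\le 1$; the paper, having already dropped that piece, reaches the $2\highr^2\kappa$ slack using only $0\le V(\pi)\le\highr$ and $\abs{b_s}\le\highr\kappa$ applied to the single cross term $2V(\pi)b_s$. One small correction to your last paragraph: the restriction to $n\ge 2$ is unnecessary, since $(1-\tfrac1n)+\tfrac2n=1+\tfrac1n\le 2$ for every $n\ge 1$, so your bound holds for all sample sizes.
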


We provide the complete proof of this theorem in the appendix. As expected by Rao-Blackellization, we see that if $\Phi$ completely covers $\Psi$ (i.e., for all $x \in \R^d, \Psi(x) \subset \Phi(x)$), then \smash{$\VsIPS{\pi}$} is unbiased and has more favorable performance than \smash{$\VIPS{\pi}$}. Admittedly, Eq.~\eqref{eq:lower-bound} shows that both estimators have similar mean-square error when $\pi$ puts no mass on potentially irrelevant actions $y \in \Phi^0(x)$. However, during the process of learning the optimal policy or in the event of propensity overfitting, we expect $\pi$ to put a non-zero mass on potentially irrelevant actions $y \in \Phi^0(x)$, with positive probability in $x$. For these reasons, we expect \smash{$\VsIPS{\pi}$} to provide significant improvement over \smash{$\VIPS{\pi}$} for policy learning. 

Even though Eq.~\eqref{eq:lower-bound} provides insight into the performance of sIS, unfortunately it cannot be used directly in selecting $\Phi$. We instead propose a greedy heuristic to select a small number of action selectors. For example, $\Phi^p(x)$ corresponds to the top-$p$ labels for instance $x$ according to the logging policy. With this approach, the bias of the \smash{$\VsIPS{\pi}$} estimator is a decreasing function of $p$, as the overlap with $\Psi$ increases. Furthermore, the variance increases with $p$ as long as the added actions are irrelevant. In practice, we use a small grid search for $p \in \{10, 20, 50, 100\}$ and choose the optimal $p$ with the SNIS estimator, as in BanditNet. We believe this is a reasonable approach whenever the logging policy ranks the relevant items sufficiently high but can be improved (e.g., top-$p$ for $p$ in 5 to 100).

\begin{figure*}[ht]
\begin{subfigure}{.3\textwidth}
  \centering
  \includegraphics[width=\linewidth]{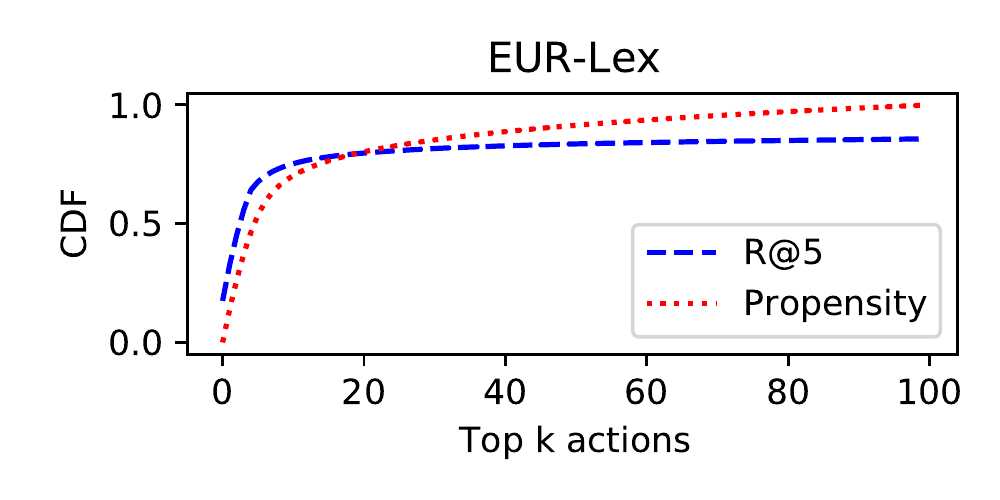}
\end{subfigure}%
\begin{subfigure}{.3\textwidth}
  \centering
  \includegraphics[width=\linewidth]{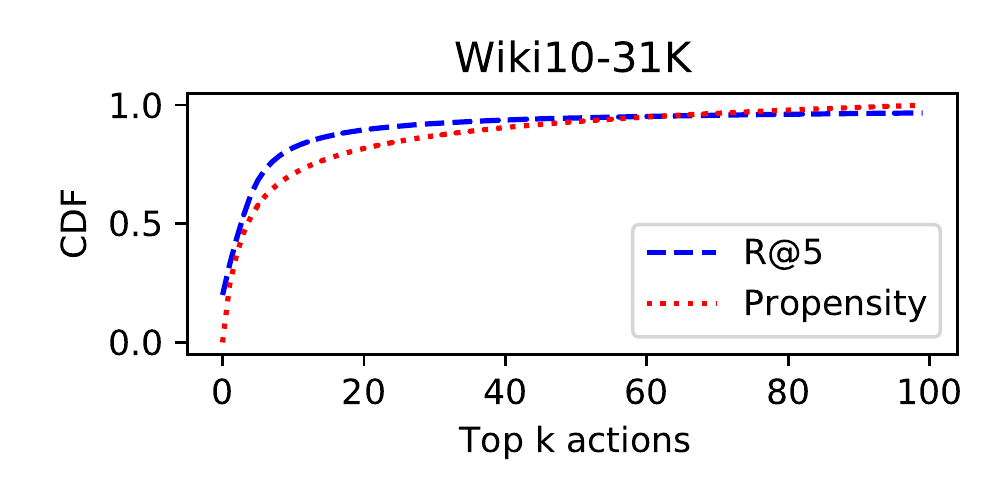}
\end{subfigure}%
\begin{subfigure}{.3\textwidth}
  \centering
  \includegraphics[width=\linewidth]{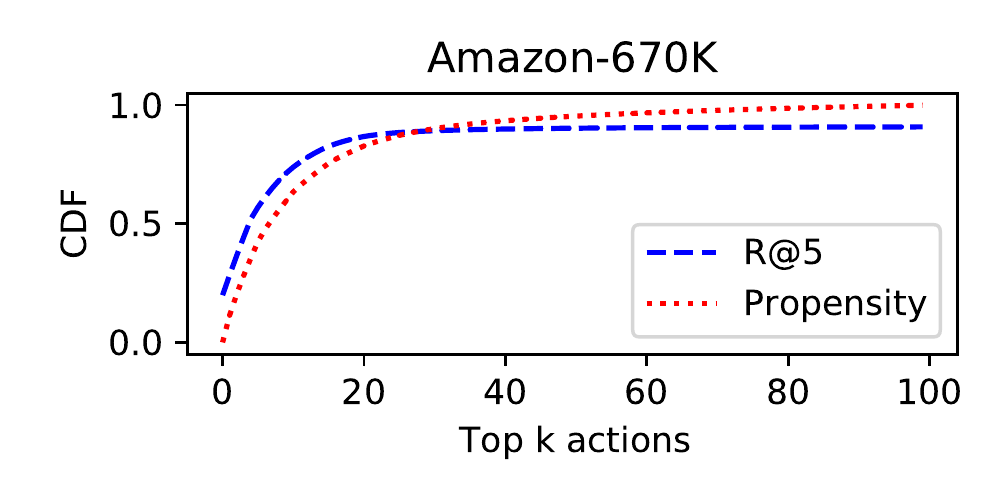}
\end{subfigure}%
    \caption{Expected $R$@5 and CDF of the logging policy for the top-$k$ action for each XMC dataset. Exploration is limited to a subset of relevant actions.}
    \label{fig:covering}
\end{figure*}

\section{Policy Optimization for eXtreme Models}
\label{sec:POXM}

We apply sIS for each of the $\ell$ terms of the policy value from Eq.~\eqref{eq:decom_value_pi} in order to estimate $V(\pi)$ from bandit feedback, $(x_i, \bm{Y}_i, \bm{R}_i)_{i=1}^n$, and learn an optimal policy. As an additional step to reduce the variance, we prune the importance sampling weights of earlier slate actions, following~\citet{Achiam2017}:
\begin{align}
\label{eq:obj_not_corrected}
    \hat{V}_\textrm{sIS}^\Phi(\pi) = \frac{1}{n}\sum_{i=1}^n\sum_{j=1}^\ell\frac{\pi^\Phi(y_{i, j} \mid x_i, y_{1:j-1})}{\rho(y_{i, j} \mid x_i, y_{1:j-1})}r_{i, j},
\end{align}
where $\pi^\Phi$ designates the distribution $\pi$ restricted to the set $\Phi(x)$ for every $x$. Because $\pi(\bm{Y} \mid x)$ is a copula, one can derive all joint distributions starting from
the corresponding one-dimensional marginals~\cite{sklar1959fonctions}. In this work, we focus on the case of ordered sampling without replacement to respect an important design restriction: the slate $\bm{Y}$ must not have redundant actions. For the $j$-th slate component, the relevant conditional probability is formed from the base marginal probabilities $\pi(y \mid x)$ as follows:
\begin{align}
\label{eq:no-replacement}
    \pi^\Phi(y_{j} \mid x, y_{1:j-1}) 
    = \frac{\pi(y_j \mid x)}{\displaystyle\sum_{y' \in \Phi(x)}\pi(y' \mid x) - \sum_{k < j}\pi(y_k \mid x)}.
\end{align}
From a computational perspective, the action selector also diminishes the computational burden, leading to efficient computations of the probabilities when the marginals are parameterized by a softmax distribution. Indeed, Eq.~\eqref{eq:no-replacement} depends only on the logits for the actions inside of the set $\Phi$. This helps our approach to scale to large XMC datasets. 

As mentioned in the background section, directly maximizing the importance sampling estimate of the policy value in Eq.~\eqref{eq:obj_not_corrected} may be pathological due to propensity overfitting. The BanditNet approach may be adapted to the slate case using a different loss translation scheme for each element:
\begin{align}
\label{eq:obj_corrected}
    \hat{V}_\textrm{sIS}^\Phi(\pi) = \frac{1}{n}\sum_{i=1}^n\sum_{j=1}^\ell\frac{\pi^\Phi(y_{i, j} \mid x_i, y_{1:j-1})}{\rho(y_{i, j} \mid x_i, y_{1:j-1})}[r_{i, j} - \lambda_j],
\end{align}
and with $(\lambda_1, \ldots, \lambda_\ell)$ selected out of a small grid based on the self-normalized importance sampling estimate of the policy value from the training data~\cite{Joachims2018}. For computational reasons, we only search for a unique $\lambda$ and, following~\citet{Joachims2018}, we focus on the grid $\{0.7, 0.8, 0.9, 1.0\}$. We refer to this approach as \textit{Policy Optimization for eXtreme Models} (POXM), named after the seminal algorithm from~\citet{Swaminathan2015CF}.

\section{Experiments}
\label{sec:experiments}
We evaluate our approach on real-world datasets with a supervised learning to bandit feedback conversion~\cite{Dudik2011, gentile2014multilabel}. We report results on three datasets from the Extreme Classification Repository~\cite{Bhatia16}, with $L$ ranging from several thousand to half a million (Table \ref{tab:dataset}). EUR-Lex~\cite{mencia2008efficient} has a relatively small label set and each instance has a sparse label set. Wiki10-31K~\cite{zubiaga2012enhancing} has a larger label set as well as more abundant annotations. Finally, Amazon-670K~\cite{mcauley2013hidden} has more than half a million labels. To our knowledge, this is the first time that such action spaces have been considered for BLBF.

\begin{table}[t]
 \caption{XMC datasets used for semi-simulation of eXtreme bandit feedback.}
\begin{center}
\begin{small}
\label{tab:dataset}
\scalebox{0.8}{
\begin{tabular}{@{}crrrrrrrrrrrrrrrr@{}}
\toprule
Dataset & $N_\text{train}$ & $N_\text{test}$ & $D$ & $L$ & $\overline{L}$ & $\hat{L}$  \\
\midrule
EUR-Lex & 15,449 & 3,865 & 186,104 & 3,956 & 5.30 & 20.79 \\
Wiki10-31K & 14,146 & 6,616 & 101,938 & 30,938 & 18.64 & 8.52 \\
Amazon-670K & 490,449 & 153,025 & 135,909 & 670,091 & 5.45 & 3.99 \\
\bottomrule
\end{tabular}
}
\end{small}
\end{center}
 $N_\text{train}$: \#training instances,
 $N_\text{test}$: \#test instances, 
 $D$: \#features, 
 $L$: \#labels and size of the action space,
 $\overline{L}$: average \#labels per instance,
 $\hat{L}$: the average \#instances per label. The partition of training and test is from the data source.
\end{table}
\begin{table*}[ht]
\caption{Performance comparisons of POXM and other competing methods over the three medium-scale datasets. All experiments are conducted with bandit feedback. In italic are the results from the AttentionXML manuscript, for the full-information feedback on all the training data (the supervised learning skyline).}
\begin{center}
\begin{small}
\label{tab:medium-scale}
	\centering
\scalebox{1.}{
	\begin{tabular}{@{}lcccccccccccccccccccccccccccccc@{}}
		\toprule
		Methods & R@3 & R@5&nDCR@3&nDCR@5&PSR@3 & PSR@5 \\
		\midrule \midrule
		& \multicolumn{6}{c}{EUR-Lex}\\
		\hline 
		Logging policy         & 33.79 & 31.23 & 33.77 & 34.07 &  22.33 & 21.66 \\
		Direct Method          & 39.58 &  32.22 & 42.64 & 38.69 & 25.81 & 26.58 \\
		BanditNet      & 15.60 & 13.51 & 17.68 & 16.29 & 8.58 & 8.48 \\
		PM-BanditNet   & 20.44 & 15.13 & 24.17 & 20.42 &9.51 & 9.52 \\
		POXM      & \textbf{52.38} & \textbf{44.48} & \textbf{55.73} & \textbf{51.64} & \textbf{35.42} & \textbf{35.25}  \\
		\textit{AttentionXML} & \textit{73.08} & \textit{61.10} & \textit{76.37} & \textit{70.49} & \textit{51.29} & \textit{53.86}\\
		\midrule \midrule
		 & \multicolumn{6}{c}{Wiki10-31K} \\
		 \hline  
		Logging policy         & 42.49 & 38.80 & 43.57 & 41.13 & 7.43 & 7.46 \\
		Direct Method          & 48.96 & 38.16 & 55.72 & 46.38 & 8.22 & 7.78 \\
		BanditNet      & 49.92 & 36.16 & 56.54 & 45.08 & 7.20 & 7.20  \\
		PM-BanditNet  & 49.06 & 37.04 & 55.91 & 45.74 & 7.09 & 7.04\\
		POXM & \textbf{60.45} & \textbf{53.03} & \textbf{64.22} & \textbf{58.26} & \textbf{10.70} & \textbf{10.58} \\
		\textit{AttentionXML} & \textit{77.78} & \textit{68.78} & \textit{79.94} & \textit{73.19} & \textit{17.05} & \textit{17.93} \\
		 \midrule \midrule
		 & \multicolumn{6}{c}{Amazon-670K}\\
		 		\hline 
		Logging policy         & 17.89 & 17.05 & 18.77 & 18.65 & 13.06 & 13.06 \\
		Direct Method          &
		23.42 & 20.14 & 25.16 & 23.28 & 15.82 & 16.30 \\
		BanditNet   & 16.83 & 14.54 & 17.18 & 16.11 & 11.67 & 11.67  \\
		PM-BanditNet & 17.31 & 14.76 & 17.67 & 16.42 & 12.05 & 12.05\\
		POXM      & \textbf{26.89} & \textbf{23.72} & \textbf{28.93} & \textbf{27.22} & \textbf{19.59} & \textbf{20.75} \\
		\textit{AttentionXML} & \textit{40.67} & \textit{36.94} & \textit{43.04} & \textit{41.35} & \textit{32.36} & \textit{35.12}\\
		\bottomrule
		\end{tabular}
}
\end{small}
\end{center}
\end{table*}

\subsection{Simulating Bandit Feedback from XMC datasets}
An XMC dataset is a collection of observations $(x_i, \bm{Y_i}^*)_{i=1}^n$ for which each instance $x_i$ is associated with an optimal set of labels $\bm{Y_i}^*$. To form a logging policy $\rho$, we train AttentionXML on a small fraction $\alpha$ of the dataset to get estimates of the marginal probability for each label (values are provided in the appendix). These probabilities must be normalized in order to sum to one, as expected in the multi-label setting~\cite{wydmuch2018no}. The ground-truth labels may be used to investigate whether $\Phi^p$ (the top $p$ actions from $\rho$) approximately satisfies the $\Psi$-covering condition. On the EUR-LeX dataset the obtained logging policy on its top $20$ action covers around 75\% of the rewards (Figure~\ref{fig:covering}). Using more actions may be suboptimal as these may add variance with only a marginal benefit on the bias, as captured by Theorem~\ref{thm:bias-var}. Finally, we form bandit feedback for slates of size $\ell$ by sampling without replacement from $\rho$. The reward is a binary variable depending on whether the chosen action belongs to the reference set $\bm{Y}^*$. We fix $\ell=5$ in all experiments.

\begin{figure*}[ht]
\begin{subfigure}{.3\textwidth}
  \centering
  \includegraphics[width=\linewidth]{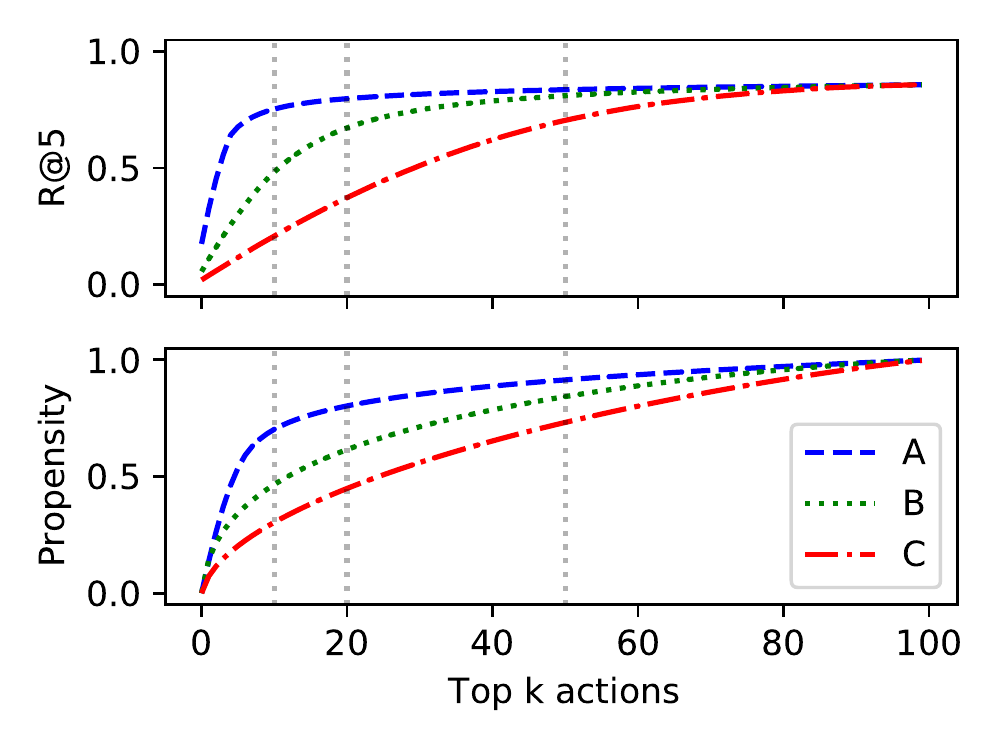}
\end{subfigure}
\hfill
\begin{subfigure}{.3\textwidth}
  \centering
  \includegraphics[width=\linewidth]{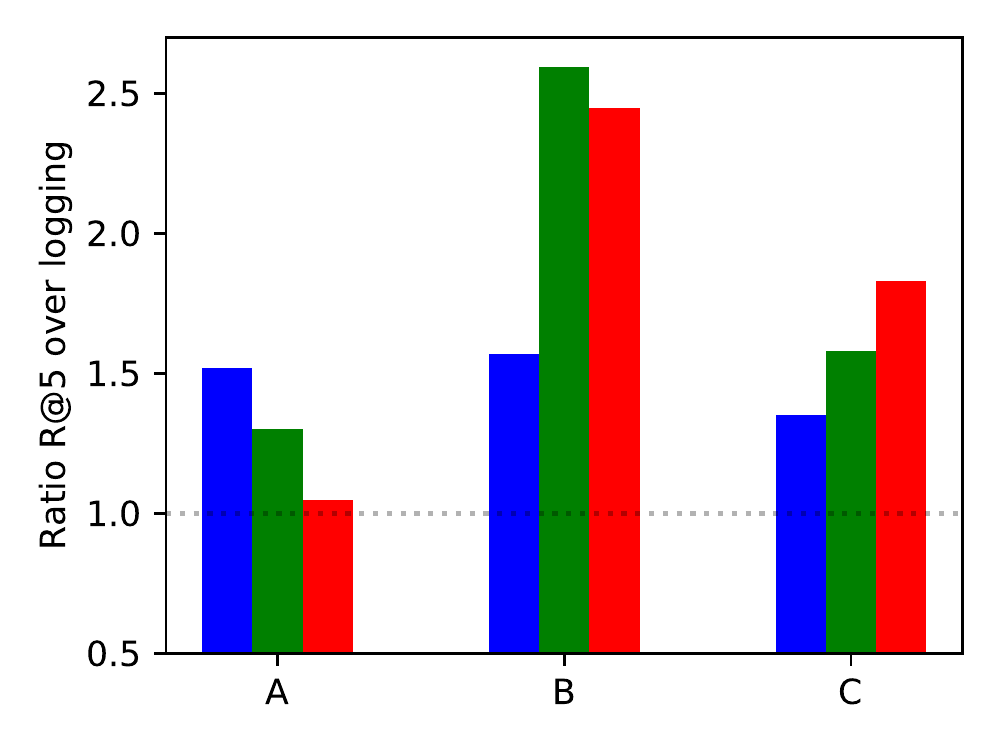}
\end{subfigure}
\hfill
\begin{subfigure}{.3\textwidth}
  \centering
  \includegraphics[width=\linewidth]{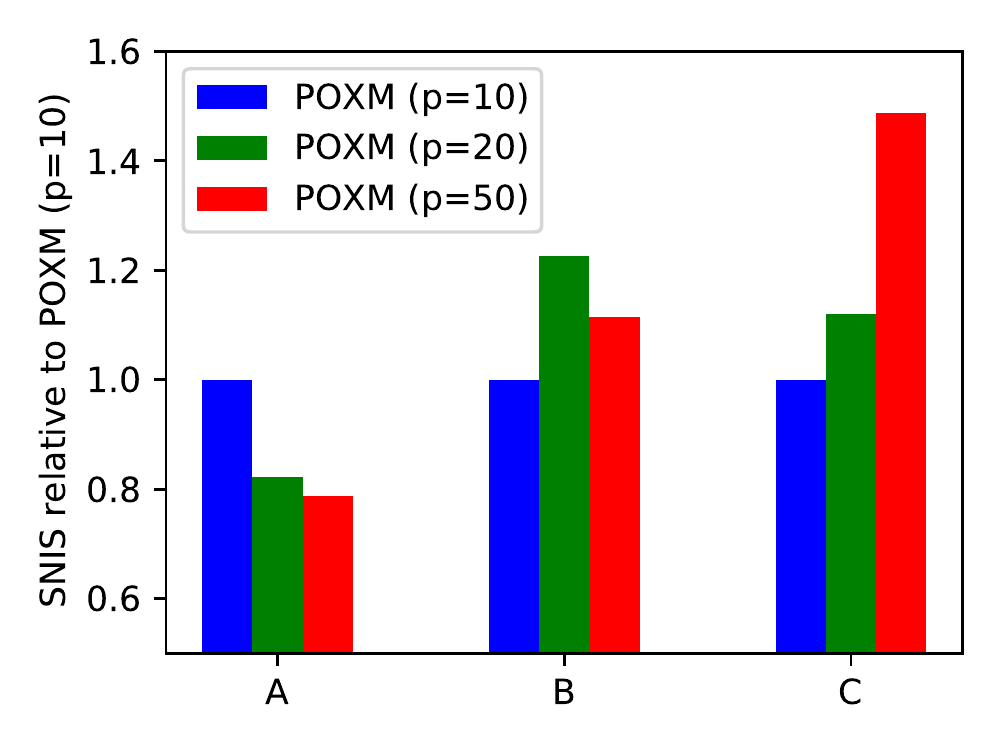}
  \end{subfigure}%
    \caption{Data-driven selection of $p$ on the EUR-LeX dataset. Left: logging policy statistics for three randomization scenarios (A, B, C, described in appendix). Middle: R@5 performance for each POXM variant and each logging policy. Right: SNIS estimates used for selection of $p$ in POXM. }
    \label{fig:robustness}
\end{figure*}

\subsection{Evaluation metrics}

P@$k$ (Precision at $k$), nDCG@$k$ (normalized Discounted Cumulative Gain at $k$) as well as PSP@$k$ (Propensity Scored Precision at $k$) are widely used metrics for evaluating XMC methods~\cite{jain2016extreme,Bhatia16}. We adapt these metrics to the evaluation of stochastic policies by taking expectations of the relevant statistics over slates of size $k$ (with distinct items). For example, R@$k$ (Reward at $k$) is defined as:
\begin{align}
\text{R@}k = \E_{\pi(y_1, \ldots, y_k)} \frac{1}{k}\sum_{l=1}^k \mathds{1}\{y_l \in \bm{Y}^*\}.
\end{align}
Similarly, we define nDCR@$k$ and PSR@$k$ (analogous to nDCG@$k$ and PSP@$k$). We estimate those metrics using sampling without replacement.

\subsection{Competing methods and experimental settings}
We compare POXM to other offline policy learning methods in the specific context of AttentionXML~\cite{you2019attentionxml}. Furthermore, hyperparameters that are specific to AttentionXML are fixed across all experiments (Table~2 of~\citet{you2019attentionxml}) so that our results are not confounded by those choices. To reduce training time and focus on how well each method deals with large action spaces, we use the LSTM weights from AttentionXML and treat those as fixed for all experiments. Finally, we noticed that the scale of gradients of the objective function for IS-based methods was different from supervised learning methods (similarly reported in~\citet{Joachims2018}). Consequently, we lowered the learning rate for these algorithms from $1\mathrm{e}{-4}$ to $5\mathrm{e}{-5}$. 

We compare POXM to several baselines. First, we report results for the Direct Method (DM), a supervised learning baseline where AttentionXML is trained with a partial classification loss, using only the feedback from $\ell$ actions for each instance. The deterministic policy picks the top-$k$ actions from the predicted value, akin to~\citet{prabhu2020extreme}. Second, we use BanditNet as a baseline. For this, we train AttentionXML using gradients of Eq.~\eqref{eq:obj_not_corrected}, but without conditioning on action set $\Phi$. Instead, we use the full softmax (akin to~\citet{Joachims2018}) or we approximate it with negative sampling~\cite{mikolov2013distributed} at training time (only for the Amazon-670K dataset). Finally, we also investigate the effect of the partial matching strategy of~\citet{wang2016beyond,li2015toward} while training with BanditNet (referred to as BanditNet-PM). In this baseline we ignore feedback from actions that are not in $\Phi^p$. 

\subsection{Results}
Table \ref{tab:medium-scale} shows the performance results of POXM and other competing
methods. POXM consistently outperformed the logging policy and always significantly improved over the competing methods. As expected, the performance is lower than AttentionXML learned on the full training set. The direct method also improved over the logging policy but only marginally, which is attributable to the bias from the logging policy. BanditNet and its partial matching variant did not improve over the logging policy on both EUR-Lex and Amazon-670K. We believe this is due to the sparsity of the rewards. Indeed, BanditNet outperforms the logging policy as well as the Direct Method baseline on Wiki10-31K that has many more labels per instance. Furthermore, we see that partial matching has a positive effect on BanditNet for EUR-LeX but not for the other datasets. 

For all choices of logging policy in Figure~\ref{fig:covering}, the optimal value of $p$ selected by POXM is the smallest possible ($p =10$). Therefore, we investigated how the algorithm behaved with more stochastic policies on the EUR-LeX dataset. For this, we injected Gumbel noise into the label probabilities (details in the appendix) and analyzed the performance of POXM for logging policies with $p \in \{10, 20, 50\}$. We provide summary statistics for the three logging policies and report the results of POXM in Figure~\ref{fig:robustness}. We see that each logging policy has a best performing value of $p$ (middle) that is aligned with the summary statistics of the logging policy (left) as well as the normalized importance sampling (SNIS) policy value estimate (right). This shows that POXM keeps improving over the logging policy for more stochastic policies and that SNIS is a reasonable procedure for selecting the parameter $p$.

\section{Discussion}
We have presented POXM: a scalable algorithmic framework for learning XMC classifiers from bandit feedback. On real-world datasets, we have shown that POXM is systematically able to improve over the logging policy.  This is not the case for the current state-of-the-art method, BanditNet.  The latter does not always improve over the logging policy, which may be attributable to propensity overfitting.

All public datasets for eXtreme multi-label classification present the problem of imbalanced label distribution. Indeed, certain important labels (commonly referred to as \textit{tail labels}), with more descriptive power, might be rarely used because of biases inherent to the data collection process. Although we do not provide a specific treatment of tail labels in this manuscript, we proposed in the appendix a simple extension of POXM (named wPOXM) based on~\citet{jain2016extreme} to address this problem. Briefly, we extended the traditional data generating process for BLBF to treat the labels as noisy, and assumed that our observation scheme is biased towards the head labels. This leads to a slight modification of the sIS estimator and the POXM procedure to include the label propensity scores. wPOXM significantly improved over POXM for all propensity-weighted metrics, with 4.77\% improvement of the PSR@3 metric. We leave more refined analyses for future work.

An important point in the XMC literature is computational efficiency. In this study, we used a machine with 8 GPUs Tesla K80 to run our experiments. This is mainly because our implementation relies on AttentionXML, itself implemented in PyTorch. The runtime of 
POXM on each dataset ranges from less than one hour for EUR-LeX to less than three hours for Amazon-670K. An important aspect of POXM's implementation is the reduced softmax computation. We verified this on the Amazon-670K dataset in which we tracked the runtime for growing size of the parameter $p$. For less than $p \leq 100$ actions, POXM took around 3s to backpropagate through 1,000 samples. However, this runtime was multiplied by ten for $p$=10,000 (25s) and we could not run POXM for $p \geq$ 20,000 because of an out-of-memory error. An interesting research direction would be to apply this framework to other XMC algorithms.

A performance gap remains between POXM and the skyline performance from the supervised method AttentionXML. 
It is possible that alternative parameterizations of the policy $\pi$ may further improve performance; for example, using a probabilistic latent tree for policy gradients as in~\citet{chen2019large} or using the Gumbel-Top-$k$ trick~\cite{kool2020ancestral}. Furthermore, doubly robust estimators~\cite{Dudik2011,su2019cab,wangbatch} may further help in incorporating prior knowledge about the reward function.

\section*{Acknowledgements}
We acknowledge Kush Batia, Lerrel Pinto, Adam Stooke, Sujay Sanghavi, Hsiang-Fu Yu, Arya Mazumdar and Rajat Sen for helpful conversations. 

\bibliography{references}

\begin{thebibliography}{52}
\providecommand{\natexlab}[1]{#1}
\providecommand{\url}[1]{\texttt{#1}}
\providecommand{\urlprefix}{URL }
\expandafter\ifx\csname urlstyle\endcsname\relax
  \providecommand{\doi}[1]{doi:\discretionary{}{}{}#1}\else
  \providecommand{\doi}{doi:\discretionary{}{}{}\begingroup
  \urlstyle{rm}\Url}\fi

\bibitem[{Achiam et~al.(2017)Achiam, Held, Tamar, and Abbeel}]{Achiam2017}
Achiam, J.; Held, D.; Tamar, A.; and Abbeel, P. 2017.
\newblock Constrained Policy Optimization.
\newblock In \emph{International Conference on Machine Learning}.

\bibitem[{Agrawal et~al.(2013)Agrawal, Gupta, Prabhu, and
  Varma}]{agrawal2013multi}
Agrawal, R.; Gupta, A.; Prabhu, Y.; and Varma, M. 2013.
\newblock Multi-label learning with millions of labels: Recommending advertiser
  bid phrases for web pages.
\newblock In \emph{International World Wide Web Conference}.

\bibitem[{Atan, Zame, and {Mihaela Van Der
  Schaar}(2018)}]{atan2018counterfactual}
Atan, O.; Zame, W.~R.; and {Mihaela Van Der Schaar}. 2018.
\newblock Counterfactual Policy Optimization Using Domain-Adversarial Neural
  Networks.
\newblock In \emph{ICML CausalML Workshop}.

\bibitem[{Babbar and Sch{\"o}lkopf(2017)}]{babbar2017dismec}
Babbar, R.; and Sch{\"o}lkopf, B. 2017.
\newblock Dismec: Distributed sparse machines for extreme multi-label
  classification.
\newblock In \emph{International Conference on Web Search and Data Mining}.

\bibitem[{Babbar and Sch{\"o}lkopf(2019)}]{babbar2019}
Babbar, R.; and Sch{\"o}lkopf, B. 2019.
\newblock Data scarcity, robustness and extreme multi-label classification.
\newblock \emph{Machine Learning} .

\bibitem[{Bai, Guan, and Wang(2019)}]{bai2019}
Bai, X.; Guan, J.; and Wang, H. 2019.
\newblock A Model-Based Reinforcement Learning with Adversarial Training for
  Online Recommendation.
\newblock In \emph{Advances in Neural Information Processing Systems}.

\bibitem[{Bhatia et~al.(2016)Bhatia, Dahiya, Jain, Mittal, Prabhu, and
  Varma}]{Bhatia16}
Bhatia, K.; Dahiya, K.; Jain, H.; Mittal, A.; Prabhu, Y.; and Varma, M. 2016.
\newblock The extreme classification repository: Multi-label datasets and code.
\newblock
  \urlprefix\url{http://manikvarma.org/downloads/XC/XMLRepository.html}.

\bibitem[{Bhatia et~al.(2015)Bhatia, Jain, Kar, Varma, and Jain}]{bhatia2015}
Bhatia, K.; Jain, H.; Kar, P.; Varma, M.; and Jain, P. 2015.
\newblock Sparse local embeddings for extreme multi-label classification.
\newblock In \emph{Advances in Neural Information Processing Systems}.

\bibitem[{Casella and Robert(1996)}]{casella1996rao}
Casella, G.; and Robert, C.~P. 1996.
\newblock {Rao-Blackwellisation} of sampling schemes.
\newblock \emph{Biometrika} .

\bibitem[{Chang et~al.(2019)Chang, Yu, Zhong, Yang, and
  Dhillon}]{chang2019xbert}
Chang, W.-C.; Yu, H.-F.; Zhong, K.; Yang, Y.; and Dhillon, I. 2019.
\newblock {X-BERT}: eXtreme Multi-label Text Classification with using
  Bidirectional Encoder Representations from Transformers.
\newblock \emph{arXiv} .

\bibitem[{Chen et~al.(2019{\natexlab{a}})Chen, Dai, Cai, Zhang, Wang, Tang,
  Zhang, and Yu}]{chen2019large}
Chen, H.; Dai, X.; Cai, H.; Zhang, W.; Wang, X.; Tang, R.; Zhang, Y.; and Yu,
  Y. 2019{\natexlab{a}}.
\newblock Large-scale interactive recommendation with tree-structured policy
  gradient.
\newblock In \emph{AAAI Conference on Artificial Intelligence}.

\bibitem[{Chen et~al.(2019{\natexlab{b}})Chen, Beutel, Covington, Jain,
  Belletti, and Chi}]{chen2019topk}
Chen, M.; Beutel, A.; Covington, P.; Jain, S.; Belletti, F.; and Chi, E.~H.
  2019{\natexlab{b}}.
\newblock Top-k off-policy correction for a REINFORCE recommender system.
\newblock In \emph{International Conference on Web Search and Data Mining}.

\bibitem[{Chen et~al.(2019{\natexlab{c}})Chen, Li, Li, Jiang, Qi, and
  Song}]{chen2019gan}
Chen, X.; Li, S.; Li, H.; Jiang, S.; Qi, Y.; and Song, L. 2019{\natexlab{c}}.
\newblock Generative Adversarial User Model for Reinforcement Learning Based
  Recommendation System.
\newblock In \emph{International Conference on Machine Learning}.

\bibitem[{Degris, White, and Sutton(2012)}]{degris2012off}
Degris, T.; White, M.; and Sutton, R.~S. 2012.
\newblock Off-policy actor-critic.
\newblock In \emph{International Conference on Machine Learning}.

\bibitem[{Dudik, Langford, and Li(2011)}]{Dudik2011}
Dudik, M.; Langford, J.; and Li, L. 2011.
\newblock Doubly Robust Policy Evaluation and Learning.
\newblock In \emph{International Conference on Machine Learning}.

\bibitem[{Gentile and Orabona(2014)}]{gentile2014multilabel}
Gentile, C.; and Orabona, F. 2014.
\newblock On multilabel classification and ranking with bandit feedback.
\newblock \emph{The Journal of Machine Learning Research} .

\bibitem[{Guo et~al.(2019)Guo, Mousavi, Wu, Holtmann-Rice, Kale, Reddi, and
  Kumar}]{guo2019}
Guo, C.; Mousavi, A.; Wu, X.; Holtmann-Rice, D.; Kale, S.; Reddi, S.; and
  Kumar, S. 2019.
\newblock Breaking the Glass Ceiling for Embedding-Based Classifiers for Large
  Output Spaces.
\newblock In \emph{Advances in Neural Information Processing Systems}.

\bibitem[{Jain, Prabhu, and Varma(2016)}]{jain2016extreme}
Jain, H.; Prabhu, Y.; and Varma, M. 2016.
\newblock Extreme multi-label loss functions for recommendation, tagging,
  ranking \& other missing label applications.
\newblock In \emph{International Conference on Knowledge Discovery and Data
  Mining}.

\bibitem[{Jasinska et~al.(2016)Jasinska, Dembczynski, Busa-Fekete,
  Pfannschmidt, Klerx, and Hullermeier}]{jasinska2016extreme}
Jasinska, K.; Dembczynski, K.; Busa-Fekete, R.; Pfannschmidt, K.; Klerx, T.;
  and Hullermeier, E. 2016.
\newblock Extreme F-measure maximization using sparse probability estimates.
\newblock In \emph{International Conference on Machine Learning}.

\bibitem[{Joachims, Swaminathan, and de~Rijke(2018)}]{Joachims2018}
Joachims, T.; Swaminathan, A.; and de~Rijke, M. 2018.
\newblock Deep Learning with Logged Bandit Feedback.
\newblock In \emph{International Conference on Learning Representations}.

\bibitem[{Joachims, Swaminathan, and Schnabel(2017)}]{joachims2017unbiased}
Joachims, T.; Swaminathan, A.; and Schnabel, T. 2017.
\newblock Unbiased learning-to-rank with biased feedback.
\newblock In \emph{International Conference on Web Search and Data Mining}.

\bibitem[{Johansson, Shalit, and Sontag(2016)}]{Johansson2016}
Johansson, F.; Shalit, U.; and Sontag, D. 2016.
\newblock Learning Representations for Counterfactual Inference.
\newblock In \emph{International Conference on Machine Learning}.

\bibitem[{Khandagale, Xiao, and Babbar(2020)}]{khandagale2019bonsai}
Khandagale, S.; Xiao, H.; and Babbar, R. 2020.
\newblock Bonsai-diverse and shallow trees for extreme multi-label
  classification.
\newblock \emph{Machine Learning} .

\bibitem[{Kool, van Hoof, and Welling(2020{\natexlab{a}})}]{kool2020ancestral}
Kool, W.; van Hoof, H.; and Welling, M. 2020{\natexlab{a}}.
\newblock Ancestral {Gumbel}-Top-k Sampling for Sampling Without Replacement.
\newblock \emph{Journal of Machine Learning Research} .

\bibitem[{Kool, van Hoof, and Welling(2020{\natexlab{b}})}]{kool2020estimating}
Kool, W.; van Hoof, H.; and Welling, M. 2020{\natexlab{b}}.
\newblock Estimating gradients for discrete random variables by sampling
  without replacement.
\newblock In \emph{International Conference on Learning Representations}.

\bibitem[{Langford, Strehl, and Wortman(2008)}]{langford2008exploration}
Langford, J.; Strehl, A.; and Wortman, J. 2008.
\newblock Exploration scavenging.
\newblock In \emph{International Conference on Machine learning}.

\bibitem[{Lefortier et~al.(2016)Lefortier, Swaminathan, Gu, Joachims, and
  de~Rijke}]{LefortierSGJR16}
Lefortier, D.; Swaminathan, A.; Gu, X.; Joachims, T.; and de~Rijke, M. 2016.
\newblock Large-scale Validation of Counterfactual Learning Methods: {A}
  Test-Bed.
\newblock In \emph{What If workshop: NeurIPS}.

\bibitem[{Li, Kim, and Zitouni(2015)}]{li2015toward}
Li, L.; Kim, J.~Y.; and Zitouni, I. 2015.
\newblock Toward predicting the outcome of an {A/B} experiment for search
  relevance.
\newblock In \emph{International Conference on Web Search and Data Mining}.

\bibitem[{Li et~al.(2018)Li, Abbasi-Yadkori, Kveton, Muthukrishnan, Vinay, and
  Wen}]{li2018offline}
Li, S.; Abbasi-Yadkori, Y.; Kveton, B.; Muthukrishnan, S.; Vinay, V.; and Wen,
  Z. 2018.
\newblock Offline evaluation of ranking policies with click models.
\newblock In \emph{International Conference on Knowledge Discovery and Data
  Mining}.

\bibitem[{Liu et~al.(2017)Liu, Chang, Wu, and Yang}]{liu2017deep}
Liu, J.; Chang, W.-C.; Wu, Y.; and Yang, Y. 2017.
\newblock Deep learning for extreme multi-label text classification.
\newblock In \emph{International Conference on Research and Development in
  Information Retrieval}.

\bibitem[{Lopez et~al.(2020)Lopez, Li, Yan, Xiong, Jordan, Qi, and
  Song}]{lopez2019cost}
Lopez, R.; Li, C.; Yan, X.; Xiong, J.; Jordan, M.; Qi, Y.; and Song, L. 2020.
\newblock Cost-Effective Incentive Allocation via Structured Counterfactual
  Inference.
\newblock In \emph{AAAI Conference in Artificial Intelligence}.

\bibitem[{McAuley and Leskovec(2013)}]{mcauley2013hidden}
McAuley, J.; and Leskovec, J. 2013.
\newblock Hidden factors and hidden topics: understanding rating dimensions
  with review text.
\newblock In \emph{International Conference on Recommender Systems}.

\bibitem[{Mencia and F{\"u}rnkranz(2008)}]{mencia2008efficient}
Mencia, E.~L.; and F{\"u}rnkranz, J. 2008.
\newblock Efficient pairwise multilabel classification for large-scale problems
  in the legal domain.
\newblock In \emph{Joint European Conference on Machine Learning and Knowledge
  Discovery in Databases}.

\bibitem[{Mikolov et~al.(2013)Mikolov, Sutskever, Chen, Corrado, and
  Dean}]{mikolov2013distributed}
Mikolov, T.; Sutskever, I.; Chen, K.; Corrado, G.~S.; and Dean, J. 2013.
\newblock Distributed representations of words and phrases and their
  compositionality.
\newblock In \emph{Advances in neural information processing systems}.

\bibitem[{Prabhu et~al.(2018)Prabhu, Kag, Harsola, Agrawal, and
  Varma}]{prabhu2018parabel}
Prabhu, Y.; Kag, A.; Harsola, S.; Agrawal, R.; and Varma, M. 2018.
\newblock Parabel: Partitioned label trees for extreme classification with
  application to dynamic search advertising.
\newblock In \emph{International World Wide Web Conference}.

\bibitem[{Prabhu et~al.(2020)Prabhu, Kusupati, Gupta, and
  Varma}]{prabhu2020extreme}
Prabhu, Y.; Kusupati, A.; Gupta, N.; and Varma, M. 2020.
\newblock Extreme Regression for Dynamic Search Advertising.
\newblock In \emph{International Conference on Web Search and Data Mining}.

\bibitem[{{Rahul}, Dahiya, and Singh(2019)}]{8777645}
{Rahul}; Dahiya, H.; and Singh, D. 2019.
\newblock A Review of Trends and Techniques in Recommender Systems.
\newblock In \emph{International Conference on Internet of Things: Smart
  Innovation and Usages}.

\bibitem[{Sklar(1959)}]{sklar1959fonctions}
Sklar, A. 1959.
\newblock Fonctions de repartition a n-dimensions et leurs marges.
\newblock \emph{Publications de l'Institut Statistique de l'Université de
  Paris} .

\bibitem[{Su et~al.(2019)Su, Wang, Santacatterina, and Joachims}]{su2019cab}
Su, Y.; Wang, L.; Santacatterina, M.; and Joachims, T. 2019.
\newblock {CAB}: Continuous adaptive blending for policy evaluation and
  learning.
\newblock In \emph{International Conference on Machine Learning}.

\bibitem[{Swaminathan and Joachims(2015{\natexlab{a}})}]{Swaminathan2015CF}
Swaminathan, A.; and Joachims, T. 2015{\natexlab{a}}.
\newblock Counterfactual Risk Minimization: Learning from Logged Bandit
  Feedback.
\newblock In \emph{International Conference on Machine Learning}.

\bibitem[{Swaminathan and Joachims(2015{\natexlab{b}})}]{Swaminathan2015self}
Swaminathan, A.; and Joachims, T. 2015{\natexlab{b}}.
\newblock The Self-Normalized Estimator for Counterfactual Learning.
\newblock In \emph{Advances in Neural Information Processing Systems}.

\bibitem[{Swaminathan et~al.(2017)Swaminathan, Krishnamurthy, Agarwal, Dudik,
  Langford, Jose, and Zitouni}]{swaminathan2017off}
Swaminathan, A.; Krishnamurthy, A.; Agarwal, A.; Dudik, M.; Langford, J.; Jose,
  D.; and Zitouni, I. 2017.
\newblock Off-policy evaluation for slate recommendation.
\newblock In \emph{Advances in Neural Information Processing Systems}.

\bibitem[{Tagami(2017)}]{tagami2017annexml}
Tagami, Y. 2017.
\newblock {AnnexML}: Approximate nearest neighbor search for extreme
  multi-label classification.
\newblock In \emph{International Conference on Knowledge Discovery and Data
  Mining}.

\bibitem[{Wang et~al.(2019)Wang, Bai, Bhalla, and Joachims}]{wangbatch}
Wang, L.; Bai, Y.; Bhalla, A.; and Joachims, T. 2019.
\newblock Batch Learning from Bandit Feedback through Bias Corrected Reward
  Imputation.
\newblock In \emph{ICML Workshop on Real-World Sequential Decision Making}.

\bibitem[{Wang et~al.(2016)Wang, Yin, Jie, Wang, Yamada, Chang, and
  Mei}]{wang2016beyond}
Wang, Y.; Yin, D.; Jie, L.; Wang, P.; Yamada, M.; Chang, Y.; and Mei, Q. 2016.
\newblock Beyond ranking: Optimizing whole-page presentation.
\newblock In \emph{International Conference on Web Search and Data Mining}.

\bibitem[{Wu and Wang(2018)}]{Wu2018}
Wu, H.; and Wang, M. 2018.
\newblock Variance Regularized Counterfactual Risk Minimization via Variational
  Divergence Minimization.
\newblock In \emph{International Conference on Machine Learning}.

\bibitem[{Wydmuch et~al.(2018)Wydmuch, Jasinska, Kuznetsov, Busa-Fekete, and
  Dembczynski}]{wydmuch2018no}
Wydmuch, M.; Jasinska, K.; Kuznetsov, M.; Busa-Fekete, R.; and Dembczynski, K.
  2018.
\newblock A no-regret generalization of hierarchical softmax to extreme
  multi-label classification.
\newblock In \emph{Advances in Neural Information Processing Systems}.

\bibitem[{Yen et~al.(2017)Yen, Huang, Dai, Ravikumar, Dhillon, and
  Xing}]{yen2017ppdsparse}
Yen, I. E.-H.; Huang, X.; Dai, W.; Ravikumar, P.; Dhillon, I.; and Xing, E.
  2017.
\newblock Ppdsparse: A parallel primal-dual sparse method for extreme
  classification.
\newblock In \emph{International Conference on Knowledge Discovery and Data
  Mining}.

\bibitem[{Yen et~al.(2016)Yen, Huang, Ravikumar, Zhong, and
  Dhillon}]{yen2016pd}
Yen, I. E.-H.; Huang, X.; Ravikumar, P.; Zhong, K.; and Dhillon, I. 2016.
\newblock Pd-sparse: A primal and dual sparse approach to extreme multiclass
  and multilabel classification.
\newblock In \emph{International Conference on Machine Learning}.

\bibitem[{You et~al.(2019{\natexlab{a}})You, Zhang, Dai, and
  Zhu}]{you2019haxmlnet}
You, R.; Zhang, Z.; Dai, S.; and Zhu, S. 2019{\natexlab{a}}.
\newblock {HAXMLNet}: Hierarchical Attention Network for Extreme Multi-Label
  Text Classification.
\newblock \emph{arXiv} .

\bibitem[{You et~al.(2019{\natexlab{b}})You, Zhang, Wang, Dai, Mamitsuka, and
  Zhu}]{you2019attentionxml}
You, R.; Zhang, Z.; Wang, Z.; Dai, S.; Mamitsuka, H.; and Zhu, S.
  2019{\natexlab{b}}.
\newblock {AttentionXML}: Label Tree-based Attention-Aware Deep Model for
  High-Performance Extreme Multi-Label Text Classification.
\newblock In \emph{Advances in Neural Information Processing Systems}.

\bibitem[{Zubiaga(2012)}]{zubiaga2012enhancing}
Zubiaga, A. 2012.
\newblock Enhancing navigation on {Wikipedia} with social tags.
\newblock \emph{arXiv} .

\end{thebibliography}

\newpage

\appendix
\onecolumn
\renewcommand{\thesection}{\Roman{section}}

\section{Proofs}
\label{sec:proofs}
\biasvar*

\begin{proof} 
We first derive an expression for the bias of the sIPS estimator under Assumption~\ref{ass:extended_overlap} for $\rho$ and Assumption~\ref{ass:sparse_feedback} for $\delta$:
\begin{align}
    \E\VsIPS{\pi} - V(\pi) &= \E_{\Prob(x)} \left[\E_{\rho(y \mid x)} \E_{\Prob(r \mid x, y)} \frac{\indyphi \pi(y \mid x)}{\rho(y \mid x)} r - \E_{\pi(y \mid x)}\delta(x, y)\right] \\
    &= \E_{\Prob(x)} \left[\E_{\rho(y \mid x)}\frac{\indyphi \pi(y \mid x)}{\rho(y \mid x)} \delta(x, y) - \E_{\pi(y \mid x)}\delta(x, y)\right] \\
    & = \E_{\Prob(x)\pi(y \mid x)} \left[\indynotphi \delta(x, y)\right] \\
    & = \E_{\Prob(x)\pi(y \mid x)} \left[\indypsi \indynotphi \delta(x, y)\right],
\end{align}
where we exploited the fact that importance sampling is unbiased for each $x$ on $\Phi(x)$ and that $\delta(x, y)$ is zero for $y \in \Psi^0(x)$. By taking absolute value on both sides, we can bound the bias as follows:
\begin{align}
\label{eq:bias_bound}
    \abs*{\E\VsIPS{\pi} - V(\pi)} & \leq \highr \E_{\Prob(x)} \pi\left(\Psi(x) \cap \Phi^0(x) \mid x\right)
\end{align}
We now relate the mean-square error of the IPS and sIPS estimators:
\begin{align}
    \MSE{\VIPS{\pi}} &= \MSE{\VsIPS{\pi}} + \E\left[ \VIPS{\pi}^2 - \VsIPS{\pi}^2 -2V(\pi)\left(\VIPS{\pi} - \VsIPS{\pi}\right) \right] \notag \\
    &= \MSE{\VsIPS{\pi}} + \underbrace{\E \VIPS{\pi}^2 - \E\VsIPS{\pi}^2}_{\textrm{Second-order moment difference}} + 2V(\pi) \underbrace{\left[\E\VsIPS{\pi} - V(\pi)\right]}_{\textrm{Negative bias}}.
\end{align}
Let us note $v(x, y) = \E[r \mid x, y]$ and $\sigma^2 = \inf_{x, y \in \R^d \times [K]}v(x, y)$. Let us focus for now on the second order moment difference in Equation~\eqref{eq:bias_bound}, which we can decompose as:
\begin{align}
    \E \VIPS{\pi}^2 - \E\VsIPS{\pi}^2 &=\frac{1}{n}\E_{\Prob(x)\rho(y \mid x)\Prob(r \mid x, y)} \frac{\indynotphi r^2 \pi^2(y \mid x)}{\rho^2(y \mid x)} \\
    &= \frac{1}{n}\E_{\Prob(x)\rho(y \mid x)} \frac{\indynotphi v(x, y) \pi^2(y \mid x)}{\rho^2(y \mid x)} \\
    &\geq \frac{\sigma^2}{n} \E_{\Prob(x)} \E_{\rho(y \mid x)} \frac{\indynotphi \pi^2(y \mid x)}{\rho^2(y \mid x)} \\
    & \geq \frac{\sigma^2}{n} \E_{\Prob(x)}\left[\frac{\pi^2\left(\Phi^0(x) \mid x\right)}{\rho\left(\Phi^0(x) \mid x\right)} \chidiv{\bar{\pi}(y \mid x)}{\bar{\rho}(y \mid x)}\right], \label{eq:lower-bound-chi}
\end{align}
where $\Delta_{\chi^2}$ denotes the chi-square divergence and $\bar{\pi}$ (resp. $\bar{\rho}$) denote the normalized probability distribution $\pi$ (resp. $\rho$) on the set $\Phi^0(x)$ for each $x$. The form of chi-square divergence in Equation~\eqref{eq:lower-bound-chi} is greater or equal to 1. Therefore, we have that:
\begin{align}
    \E \VIPS{\pi}^2 \geq \E\VsIPS{\pi}^2 + \frac{\sigma^2}{n} \E_{\Prob(x)}\frac{\pi^2\left(\Phi^0(x) \mid x\right)}{\rho\left(\Phi^0(x) \mid x\right)}.
    \label{eq:lower-bound-2}
\end{align}
Then, using the fact that the $\VsIPS{\pi}$ always underestimates $V(\pi)$ and the upper bound in Equation~\eqref{eq:bias_bound}, we get:
\begin{align}
    \E\VsIPS{\pi} - V(\pi) & \geq -\highr \E_{\Prob(x)} \pi\left(y \in \Psi(x) \cap \Phi^0(x) \mid x\right)
\end{align}
Put together, we recover the bound:
\begin{align}
    \MSE{\VIPS{\pi}} &\geq \MSE{\VsIPS{\pi}} + \frac{\sigma^2}{n} \E_{\Prob(x)}\frac{\pi^2\left(\Phi^0(x) \mid x\right)}{\rho\left(\Phi^0(x) \mid x\right)} -2\highr^2 \E_{\Prob(x)} \pi\left(\Psi(x) \cap \Phi^0(x) \mid x\right)
\end{align}
\end{proof}

\section{Data splitting and logging policy construction}

\label{app:logging}
We split the datasets into train / test according to the AttentionXML manuscript (also in Table 1 of our manuscript). All results are reported on the test dataset. We first run AttentionXML on a fraction $\alpha$ of the training dataset. The resulting probabilistic latent tree provides us with non-normalized marginal probabilities $\hat{p}(y \mid x)$ for each label $y$ and instance $x$. For all experiments, we keep only those probabilities for the top 100 actions for each instance. Then, we may control the randomness over the logging policy in two complementary ways. First, we may add an iid centered Gumbel random variable $g$ (with scaling parameter $\beta$) in order to perturb the rank of the actions:
\begin{align}
    E(x, y) = \log \hat{p}(y \mid x) + g.
\end{align}
We use this step only for the robustness analysis. Second, we may scale this energy by a temperature parameter $T$ to get the re-normalized probability distribution:
\begin{align}
    \rho(y \mid x) = \frac{e^\frac{E(x, y)}{T}}{\sum_{y' \in \mathcal{Y}}e^\frac{E(x, y')}{T}},
\end{align}
with the convention that $E(x, y) = -\infty$ if $y$ is not in the top 100 action for $x$ with respect to $\hat{p}(y \mid x)$. We report all the values of $\alpha, \beta$ and $T$ in Table~\ref{tab:logging_param}.

Once the logging policy constructed, we train all the bandit feedback algorithms on all of the training set instances $x$, for which we attribute actions $y$ sampled without replacement from $\rho$ instead of the optimal actions $y^*$. 

\begin{table}[ht]
 \caption{Parameters for logging policies from XMC datasets.}
\vspace{-0.1in}
\begin{center}
\begin{small}
\label{tab:logging_param}
\begin{tabular}{@{}crrrrrrrrrrrrrrrr@{}}
\toprule
Dataset & Variant & $\alpha$ & $\beta$ & $T$  \\
\midrule
EUR-Lex~\cite{mencia2008efficient} & A & 0.2 & 0 & 2 \\
EUR-Lex~\cite{mencia2008efficient} & B & 0.2 & 1.5 & 5 \\
EUR-Lex~\cite{mencia2008efficient} & C & 0.2 & 4 & 18 \\
Wiki10-31K~\cite{zubiaga2012enhancing} &  & 0.1 & 0 & 1 \\
Amazon-670K~\cite{mcauley2013hidden} &  & 0.2 & 0 & 2 \\
\bottomrule
\end{tabular}
\end{small}
\end{center}
\end{table}

\section{Treatment of tail labels in POXM}
\label{app:tail}

In this section, we explain how to adapt the weighing strategy from~\cite{jain2016extreme} to the case of POXM. We refer to this modified algorithm as wPOXM. 

For developing the intuition of this algorithm, we simply detail the case $\ell=1$. In this setting, the value of a policy $\pi$ can be defined as:
\begin{align}
    V(\pi) = \E_{\mathcal{P}(x)} \E_{\pi(y \mid x)} \E\left[r \mid x, y\right].
\end{align}
In the classification setting, $r$ is a binary random variable indicating whether the label $y$ is relevant for context $x$. Now, we extend this setting by assuming that the label set we observe is incomplete (the main assumption of \cite{jain2016extreme}) and that there exists an unobserved random variable $y^*$ that denotes the complete label set. 

In this setting, we would like to maximize the reward over the complete label set distribution:
\begin{align}
    V(\pi) = \E_{\mathcal{P}(x)} \E_{\pi(y^* \mid x)} \E\left[r \mid x, y^*\right].
\end{align}
However, we only observe the data on the logging policy, so we must reweigh the samples accordingly:
\begin{align}
    V(\pi) = \E_{\mathcal{P}(x)} \E_{\rho(y \mid x)} \frac{\pi(y \mid x)}{p_y\rho(y \mid x)} \E\left[r \mid x, y\right],
\end{align}
where $p_y$ are the propensity weights estimated from~\cite{jain2016extreme}.

We therefore implemented a simple extension of POXM, named wPOXM, that follows this reweighing scheme. We benchmarked this approach against POXM in the EUR-LeX dataset and report the results in Table~\ref{tab:wPOXM}.

\begin{table}[h]
\caption{Performance for wPOXM relative to POXM on the EUR-Lex dataset.\label{tab:wPOXM}}
\begin{center}
\vspace{-0.1in}
\begin{tabular}{@{}lcccccccccccccccccccccccccccccc@{}}
		\toprule
		\textbf{R@3} & \textbf{R@5}&\textbf{nDCR@3}&\textbf{nDCR@5}&\textbf{PSR@3} & \textbf{PSR@5} & \textbf{PSnDCR@3} & \textbf{PSnDCR@5} \\[0.1cm]
		-5.38\% & -5.00\% & -4.53\% & -4.38\% & +4.77\% & +4.28\% & +5.73\% & +6.51\% \\ 
		\hline 
\end{tabular}
\end{center}
\end{table}

As one can see from this table, weighing by the inverse propensity score is effective, as all the weighted metrics are significantly improved. We note a lower performance for the other rewards metrics, explain by the fact that head labels get less often chosen by the policy. All in all, those results show that POXM's paradigm is flexible and can be extended to take into account the tail labels, an important aspect of eXtreme multi-label classification.
\end{document}